\newtheorem{theorem}{Theorem}
\newtheorem{corollary}{Corollary}
\newtheorem{lemma}{Lemma}
\newcommand{\bE}{\mathbb{E}}
\newcommand{\tv}{{D_{\textrm{TV}}}}
\newcommand{\kl}{{D_{\textrm{KL}}}}
\newcommand{\Dpi}{{D_{\textrm{TV}}^{\max}}}
\newcommand{\half}{\frac{1}{2}}
\newcommand{\newpi}{\pi}
\newcommand{\oldpi}{{\pi_{b}}}
\newcommand{\Jmodel}{\tilde{J}}
\newcommand{\mask}{{\textrm{mask}}}
\title{Trust the Model When It Is Confident: \\Masked Model-based Actor-Critic}
\author{%
  Feiyang Pan\thanks{Equal contribution.}\,\,$^{1,3}$
  \qquad 
  Jia He\footnotemark[1]\,\,$^{2}$
  \qquad 
  Dandan Tu$^{2}$
  \qquad 
  Qing He$^{1,3}$\\ 
  $^{1}$IIP, Institute of Computing Technology, Chinese Academy of Sciences.\\
  $^{2}$Huawei EI Innovation Lab.\\
  $^{3}$University of Chinese Academy of Sciences.\\
  \texttt{\{pfy824, hejia0149\}@gmail.com}, \texttt{tudandan@huawei.com}, \texttt{heqing@ict.ac.cn}
}
\begin{document}

\maketitle

\begin{abstract}
It is a popular belief that model-based Reinforcement Learning (RL) is more sample efficient than model-free RL, but in practice, it is not always true due to overweighed model errors. In complex and noisy settings, model-based RL tends to have trouble using the model if it does not know \emph{when to trust the model}. 
  
In this work, we find that better model usage can make a huge difference. We show theoretically that if the use of model-generated data is restricted to state-action pairs where the model error is small, the performance gap between model and real rollouts can be reduced. It motivates us to use model rollouts only when the model is confident about its predictions. We propose Masked Model-based Actor-Critic (M2AC), a novel policy optimization algorithm that maximizes a model-based lower-bound of the true value function. M2AC implements a masking mechanism based on the model's uncertainty to decide whether its prediction should be used or not. Consequently, the new algorithm tends to give robust policy improvements. Experiments on continuous control benchmarks demonstrate that M2AC has strong performance even when using long model rollouts in very noisy environments, and it significantly outperforms previous state-of-the-art methods.
\end{abstract}

\section{Introduction}
Deep RL has achieved great successes in complex decision-making problems \cite{mnih2015human,alphagozero,haarnoja18sac}. Most of the advances are due to deep model-free RL, which can take high-dimensional raw inputs to make decisions without understanding the dynamics of the environment. Although having good asymptotic performance, current model-free RL methods usually require a tremendous number of interactions with the environment to learn a good policy. On the other hand, model-based reinforcement learning (MBRL) 
enhances sample efficiency by searching the policy under a 
fitted model that approximates the true dynamics, so it is favored in problems where only a limited number of interactions are available. For example, in continuous control, recently model-based policy optimization (MBPO, \cite{janner2019mbpo}) yields comparable results to state-of-the-art model-free methods with much fewer samples, which is appealing for real applications. 

However, there is a fundamental concern of MBRL \cite{feinberg2018mve} that \emph{learning a good policy requires an accurate model, which in turn requires a large number of interactions with the true environment}. For this issue, theoretical results of MBPO \cite{janner2019mbpo} suggest to use the model when the model error is ``sufficiently small'', which contradicts the intuition that MBRL should be used in low-data scenario. In other words, ``\emph{when to trust the model?}'' remains an open problem, especially in settings with nonlinear and noisy dynamics and one wants to use a deep neural network as the model.

Due to the model bias, existing deep MBRL methods tend to perform poorly when 1) the model overfits in low-data regime, 2) the algorithm runs long-horizon model rollouts, and 3) the true dynamic is complex and noisy. As a result, most state-of-the-art MBRL algorithms use very short model rollouts (usually less than four steps \cite{feinberg2018mve,buckman2018steve,luo2018algorithmic,janner2019mbpo}), 
and the applicability is often limited in 
settings 
\begin{wrapfigure}{r}{0.55\textwidth}
\vspace{-5mm}
  \begin{center}
    \includegraphics[width=0.55\textwidth]{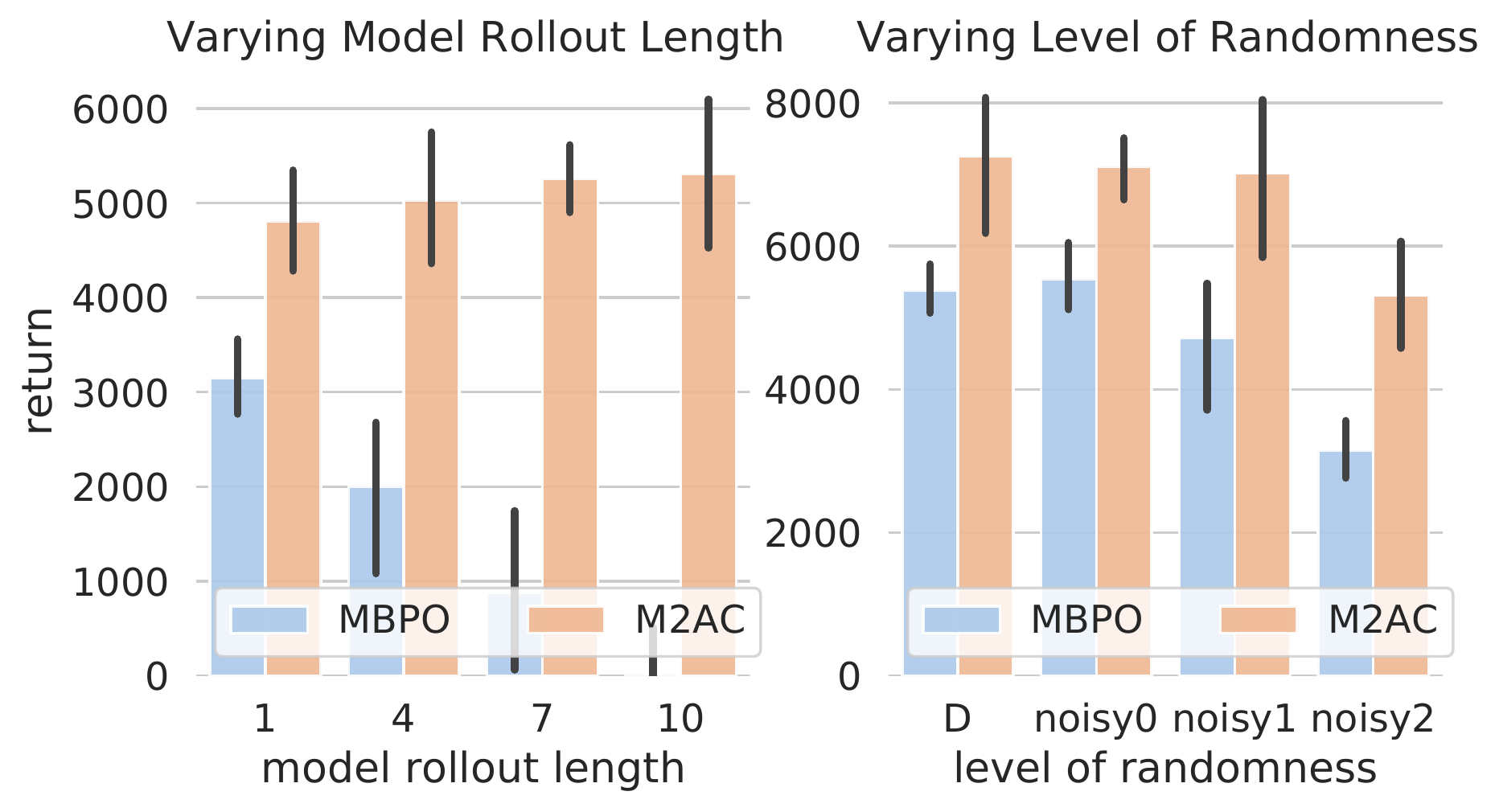}
  \end{center}
  \vspace{-3mm}
  \caption{Two motivating examples. Left: returns in HalfCheetah-noisy2 environments at 25k steps. Previous method is sensitive to model rollout length because the model error accumulates as the length increases, while our method (M2AC) can benefit from long rollouts. Right: returns at 25k steps in HalfCheetah and three noisy derivatives with different levels of randomness. The scores of MBPO drop rapidly when the environment becomes noisy, while M2AC is more robust.}
  \vspace{-5mm}
  \label{fig:motivating}
\end{wrapfigure}
with deterministic dynamics. We show two motivating examples in Figure \ref{fig:motivating},
which demonstrates that MBPO \cite{janner2019mbpo}, a state-of-the-art MBRL method,
indeed suffers from the mentioned limitations. 
These issues make them impractical to use in real-world problems.

In this work, we introduce Masked Model-based Actor-Critic (M2AC), which alleviates the mentioned issues by reducing large influences of model errors through a masking mechanism. We derive theoretical results that if the model is only used when the model error is small, the gap between the real 
return and the masked model rollout value can be bounded. This result motivates us to design a practical algorithm M2AC by interpolating masked model-based rollouts with model-free experience replay. We empirically find the new algorithm sample efficient as well as robust across various tasks.

Our contributions are outlined as follows.
\begin{itemize}
    \item We put forward masked model rollouts and derive a general value bounds for Masked Model-based Policy Iteration. We prove that the gap between the model-based and the true Q-values is bounded and can be reduced via better model usage.
    \item We propose a simple but powerful algorithm named Masked Model-based Actor-Critic (M2AC). It reduces the influences of model error with a masking mechanism that ``trusts the model when it is confident'' and eliminates unreliable model-generated samples. 
    \item Extensive experiments on continuous control show that M2AC has high sample efficiency that it reaches comparable returns to state-of-the-art model-free methods with much fewer interactions, and is robust even when using long model rollouts in very noisy environments.
\end{itemize}
\section{Background}
\subsection{Formulation and notations}
We consider an infinite-horizon Markov Decision Process (MDP) $(\mathcal{S}, \mathcal{A}, r, p, p_0, \gamma)$, with $\mathcal{S}$ the state space, $\mathcal{A}$ the action space, $r: \mathcal{S}\times\mathcal{A}\to (-r_{\max}, r_{\max})$ the reward function, $p: \mathcal{S}\times\mathcal{A}\times\mathcal{S}\to \mathbb{R}_{\geq0}$ the dynamic, $p_0: \mathcal{S}\to \mathbb{R}_{\geq0}$ the density of the initial state, and $\gamma \in (0,1)$ the discount factor. 
In model-based RL, we need to learn a transition model $\tilde p(s' | s,a)$, a reward model $\tilde r(s,a)$, and a stochastic policy $\pi: \mathcal{S}\times \mathcal{A} \to \mathbb{R}_{\geq0}$. 
For the sake of readability, we make a mild assumption that $r(s,a)$ is \emph{known} so $\tilde r(s,a)\equiv r(s,a)$, as it can always be considered as part of the model if unknown. We still use the notation $\tilde r$ to make it clear when the reward signal is generated by the learned model.

Given policy $\pi$ and transition $p$, we denote the density of state-action after $t$ steps of transitions from $s$ as $p_t^\pi(s_t, a_t|s)$. Specifically, $p_0^\pi(s, a|s)=\pi(a|s)$. 
For simplicity, when starting from $p_0(s_0)$, we let $p_t^\pi(s_t, a_t)=\bE_{s_0\sim p_0}[p_t^\pi(s_t, a_t|s_0)]$. Then we have the action-value $Q^\pi(s,a;p)=r(s,a)+\gamma\bE_{p(s'|s,a)}[V^{\pi}(s')]$ and the discounted return $J(\pi;p) =\sum_{t=0}^{\infty} \gamma^t \bE_{p_t^\pi}[r(s_t, a_t)]$.
We write  $Q^\pi(s,a)=Q^\pi(s,a;p)$ and $J(\pi) = J(\pi;p)$ under the true model $p$, and we use $\tilde Q^\pi(s,a)=Q^\pi(s,a;\tilde p)$ and $\tilde J(\pi)= J(\pi;\tilde p)$ under the approximate model $\tilde p$.

\subsection{Previous model-based policy learning methods}

The essence of MBRL is to use a learned model $\tilde p$ to generate imaginary data so as to optimize the policy $\pi$.
Recent studies \cite{luo2018algorithmic,janner2019mbpo} derive performance bounds in the following form: 
\begin{equation}
    J(\newpi) \geq \Jmodel (\newpi) - D_{p,\oldpi}[(\tilde p, \newpi), (p, \oldpi)],
    \label{eq:J-bound-form1}
\end{equation}
where $\oldpi$ is an old behavior policy, $D_{p,\oldpi}[(\tilde p,\newpi), (p, \oldpi)]$ measures some divergence between $(\tilde p, \newpi)$ and $(p, \oldpi)$. The subscript $p,\oldpi$ means that the divergence is over trajectories induced by $\oldpi$ in the real $p$, i.e., $D_{p,\oldpi}(\tilde p,\newpi)=\bE_{s,a\sim p^\oldpi}[f(\tilde p,\newpi, p, \oldpi ; s, a)]$ for some distance measure $f(\cdot)$. SLBO~\cite{luo2018algorithmic} suggests such an error bound, as the expectation is over an old behavior policy in the true dynamics, it can be estimated with samples collected by the behavior policy. Following similar insights, Janner et al. \cite{janner2019mbpo} derive a general bound with the following form
\begin{equation}
J(\newpi) \geq \Jmodel(\newpi) - c_1\Dpi(\newpi, \oldpi) - c_2\delta^{\max}_{p,\oldpi}(\tilde p, p) .
\label{eq:performance_bound_mbpo}\end{equation}
where $\Dpi(\newpi,\oldpi)$ is the divergence between the target policy and the behavior policy, and 
$\delta^{\max}_{p,\oldpi}(\tilde p, p)\triangleq \max_{t} \bE_{s,a\sim p_{t-1}^\oldpi}[\tv[p(s'|s,a), \tilde p(s'|s,a)]]$ is the bound of expected model errors induced by a behavior policy $\pi_b$ in the true dynamics $p$ which can be estimated from data. 

However, we found two major weaknesses. First, although such results tell us how to \emph{train} the model (i.e., by minimizing $\delta^{\max}_{p,\oldpi}(\tilde p, p)$), it does not tell us when to \emph{trust} the model nor how to \emph{use} it. For example, if $\delta^{\max}_{p,\oldpi}(\tilde p, p)$ is estimated to be large, which is likely to happen in the beginning of training, should we give up using the model? %\bluefont{Though \cite{janner2019mbpo} propose a $k$-branched rollout which motivates model usages when the model error is ``sufficiently low'', it still cannot suggest an algorithm design.}
The second weakness is due to the term of $\Dpi(\newpi, \oldpi)$ which depends on a reference policy. Consider that if we had an almost perfect model (i.e., $\tilde p \approx p$), an ideal algorithm should encourage policy search among all possible policies rather than only within the neighborhood of $\oldpi$.

\subsection{Other related work}
Model-based RL has been widely studied for decades for its high sample efficiency and fast convergence \cite{sutton1990dyna,deisenroth2011pilco,chua2018pets,ha2018worldmodels}. The learned model can be used for planning \cite{chua2018pets,wang2019exploring} or for generating imaginary data to accelerate policy learning \cite{sutton1990dyna,deisenroth2011pilco,ha2018worldmodels,feinberg2018mve,buckman2018steve,luo2018algorithmic,pan2019pome,janner2019mbpo}. Our work falls into the second category. 

To improve the policy with model-generated imaginary data in a general non-linear setting, two main challenges are model-learning and model-usage. To obtain a model with less model error, the model ensemble has been found helpful for reducing overfitting \cite{kurutach2018modelensemble,buckman2018steve,janner2019mbpo,luo2018algorithmic}, which is also used in our method. For better model-usage, 
except for the mentioned SLBO \cite{luo2018algorithmic} and MBPO \cite{janner2019mbpo}, another line of work is model-based value expansion (MVE, \cite{feinberg2018mve}) which uses short model rollouts to accelerate model-free value updates. STEVE \cite{buckman2018steve} extends MVE to use stochastic ensemble of model rollouts. Our method also uses stochastic ensemble of imaginary data for policy improvement.

This work is also related to uncertainty estimation in MBRL, which leads to better model-usage if the model ``knows what it knows'' \cite{li2011knows}. To name some, PETS \cite{chua2018pets} uses the model's uncertainty for stochastic planning. \cite{kalweit2017uncertainty} uses imaginary data only when the critic has high uncertainty.  STEVE~\cite{buckman2018steve} reweights imaginary trajectories according to the variance of target values to stabilize value updates. Our masking method can also be understood as reweighting with binary weights, but we only measure uncertainty in the dynamics thus does not depend on the actor nor the critic.

\section{Masked Model-based Policy Iteration}

\subsection{Model-based return bounds}
To guide a better model usage, we seek to derive a general performance guarantee for model-based policy learning in the following ``model-based'' form without relying on any old reference policy
\begin{equation}
J(\newpi) \geq \Jmodel (\newpi) - D_{\tilde p,\pi}(\tilde p, p).
\label{eq:J-bound-form2}
\end{equation}
To get it, let $\delta^{(t)}_{\tilde p, \pi}(\tilde p, p) \triangleq \bE_{s,a\sim \tilde p_{t-1}^{\newpi}}[\tv[p(s'|s,a), \tilde p(s'|s,a)]]$ be the expected model error at step $t$, $t \geq 1$, where the expectation is over the state-action pairs encountered by running the \emph{new} policy $\pi$ in the \emph{fitted} model $\tilde p$. Then if the model rollouts starts at $p_0(s_0)$, we have

\begin{theorem}[Model-based performance bound for model-based policy iteration]
\begin{equation}
J(\newpi) \geq \Jmodel(\newpi) - \frac{2r_{\max}}{1-\gamma}\sum_{t=1}^{+\infty} \gamma^t \delta^{(t)}_{\tilde p, \pi}(\tilde p, p)
\label{eq:performance_bound}\end{equation}
\end{theorem}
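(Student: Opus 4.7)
The plan is to prove the bound by first expanding the gap $J(\pi) - \tilde J(\pi)$ as a discounted sum of reward-expectation differences between the true and model-induced state-action occupancies, then upper-bounding each term by a total variation distance between $p_t^\pi$ and $\tilde p_t^\pi$, and finally controlling that TV distance by the per-step model errors $\delta^{(t)}_{\tilde p,\pi}(\tilde p,p)$ via a telescoping/coupling argument.

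First I would write
\begin{equation*}
\tilde J(\pi) - J(\pi) = \sum_{t=0}^{\infty} \gamma^t \Bigl( \bE_{(s,a)\sim \tilde p_t^\pi}[r(s,a)] - \bE_{(s,a)\sim p_t^\pi}[r(s,a)] \Bigr),
\end{equation*}
and use $|r|\leq r_{\max}$ together with the standard inequality $|\bE_\mu f - \bE_\nu f| \leq 2\|f\|_\infty \tv(\mu,\nu)$ to obtain $\tilde J(\pi) - J(\pi) \leq 2 r_{\max}\sum_{t=0}^{\infty} \gamma^t \tv(p_t^\pi, \tilde p_t^\pi)$. Since $p_0^\pi = \tilde p_0^\pi$, the $t=0$ term vanishes, so the sum effectively starts at $t=1$.

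Next I would establish the key lemma
\begin{equation*}
\tv\bigl(p_t^\pi,\tilde p_t^\pi\bigr) \;\leq\; \sum_{k=1}^{t} \delta^{(k)}_{\tilde p,\pi}(\tilde p, p),
\end{equation*}
by induction on $t$. The inductive step decomposes the one-step transition by the triangle inequality: writing the joint one-step kernel as $(s,a)\mapsto p(s'|s,a)\pi(a'|s')$ (respectively with $\tilde p$), one splits
\begin{equation*}
\tv\bigl(p_{t-1}^\pi \!\cdot p, \tilde p_{t-1}^\pi \!\cdot \tilde p\bigr)
\leq \tv\bigl(p_{t-1}^\pi \!\cdot p, \tilde p_{t-1}^\pi \!\cdot p\bigr) + \tv\bigl(\tilde p_{t-1}^\pi \!\cdot p, \tilde p_{t-1}^\pi \!\cdot \tilde p\bigr),
\end{equation*}
where the first term reduces to $\tv(p_{t-1}^\pi,\tilde p_{t-1}^\pi)$ (marginalization is nonexpansive and the policy factor cancels) and the second is exactly $\bE_{\tilde p_{t-1}^\pi}[\tv(p(\cdot|s,a),\tilde p(\cdot|s,a))] = \delta^{(t)}_{\tilde p,\pi}(\tilde p, p)$. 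Combining the two steps and swapping the order of summation,
\begin{equation*}
\tilde J(\pi) - J(\pi) \leq 2r_{\max}\sum_{t=1}^{\infty}\gamma^t\sum_{k=1}^{t}\delta^{(k)}_{\tilde p,\pi} = 2r_{\max}\sum_{k=1}^{\infty}\delta^{(k)}_{\tilde p,\pi}\sum_{t=k}^{\infty}\gamma^t = \frac{2r_{\max}}{1-\gamma}\sum_{t=1}^{\infty}\gamma^t \delta^{(t)}_{\tilde p,\pi},
\end{equation*}
which rearranges to the claim.

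The delicate point — and the only place the proof could go wrong — is the triangle-inequality step in the telescoping lemma, because the errors $\delta^{(t)}_{\tilde p,\pi}$ are defined as expectations under the \emph{model-induced} distribution $\tilde p_{t-1}^\pi$ rather than the true $p_{t-1}^\pi$. The decomposition above is written so that the expectation falls on $\tilde p_{t-1}^\pi$ exactly, which is the whole reason one can get a ``model-based'' bound of the form in equation \eqref{eq:J-bound-form2} that does not refer to any old behavior policy $\pi_b$. Everything else is essentially a geometric series computation.
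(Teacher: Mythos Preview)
Your proposal is correct and follows essentially the same route as the paper: the paper also bounds $|\tilde J(\pi)-J(\pi)|$ by $2r_{\max}\sum_t\gamma^t\tv(p_t^\pi,\tilde p_t^\pi)$, establishes the same telescoping inequality $\tv(p_t^\pi,\tilde p_t^\pi)\leq \tv(p_{t-1}^\pi,\tilde p_{t-1}^\pi)+\delta^{(t)}_{\tilde p,\pi}$ via a joint-distribution TV lemma that places the expectation on $\tilde p$, and then swaps the order of summation to extract the $\frac{1}{1-\gamma}$ factor. The only cosmetic difference is that the paper first proves a two-policy version (Lemma~\ref{lemma:return}) and then specializes to $\tilde\pi=\pi$, whereas you work with a single policy from the outset.
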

The proof can be found in Appendix B. 
Comparing to Eq. (\ref{eq:performance_bound_mbpo}), a clear drawback of Eq. (\ref{eq:performance_bound}) is that the cumulative model error cannot be estimated by observed data and it can be large for unconstrained long model rollouts. 
To reduce the discrepancy, we wish to construct a new bound by changing the way we use the model. Intuitively, if we could limit the use of the model to states where the model error was small, the return discrepancy could be smaller. 
To this end, we define \emph{masked model rollouts}, which is not a real rollout method but is useful for theoretical analysis. 
\subsection{Masked model rollouts}
To formulate the masked model rollout, we introduce a few more notations. Imagine a hybrid model that could switch between using the fitted model and the true dynamics, and a binary \emph{mask} $M: \mathcal{S}\times\mathcal{A} \to \{0,1\}$ over state-action pairs.
Consider that the model rollout starts at state $s_0$. Once it encounters a state-action pair that has $M(s_H, a_H)=0$, it switches to use the true dynamics. Here we use the uppercase $H$ as a random variable of the stopping time. So the transition becomes $p_{\mask}(s_{t+1}|s_{t},a_{t}) = \mathbb{I}\{t<H\}\tilde p(s_{t+1} | s_{t}, a_{t}) + \mathbb{I}\{t\geq H\} p(s_{t+1} | s_{t}, a_{t})$. Such a masked rollout trajectory has the following form (given model $\tilde p$, mask $M$, policy $\pi$, and state $s_0$)
\[
\begin{aligned}
\tau \mid \tilde p, \pi, M, s_0 &: s_0,\underbrace{a_0 \stackrel{\tilde p}{\longrightarrow}\cdots \stackrel{\tilde p}{\longrightarrow} s_H}_{\mbox{rollout with $(\tilde p, \pi)$}},\underbrace{a_H \stackrel{p}{\longrightarrow} s_{H+1}, a_{H+1} \stackrel{p}{\longrightarrow} \cdots}_{\mbox{rollout with $(p, \pi)$}}
\end{aligned}
\]
Then we define $\tilde Q_{\mask}(s,a) = \bE_{\tau,H}[\sum_{t=0}^{H-1} \gamma^t \tilde r(s,a) + \sum_{t=H}^\infty \gamma^t r(s,a) \mid \tilde p, \pi, M, s_0=s, a_0=a]$.

\subsection{Return bounds for masked model policy iteration}
Given a model $\tilde p$ and a binary mask $M$, let $
\epsilon = \max_{s,a}M(s,a)\tv[p(s'|s,a), \tilde p(s'|s,a)]$ be the maximum model error over masked state-action pairs. Then we have the following bound

\begin{theorem}\label{theorem-Q}
Given $\tilde p$ and $M$, the Q-value of any policy $\pi$ in the real environment is bounded by
\begin{equation}
Q^\newpi(s,a) \geq \tilde{Q}_{\mask}^\newpi(s,a) - \alpha\epsilon \sum_{t=0}^{+\infty} \gamma^t w(t;s,a),
\label{eq:Q-bound}\end{equation}
where $\alpha=\frac{2r_{\max}}{1-\gamma}$, and $w(t;s,a)= \Pr\{t< H \mid \tilde p, \pi, M, s_0=s, a_0=a\}$ is the expected proportion of model-based transition at step $t$.
\end{theorem}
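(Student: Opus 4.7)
}
The plan is to reduce the masked-rollout Q-gap to a weighted sum of per-step transition errors, in the spirit of a simulation lemma, and then exploit the fact that the only steps contributing a nonzero per-step error are those on which the masked model actually follows $\tilde p$. For every $t\geq 0$, let $\rho_t^\pi(\cdot \mid s,a)$ denote the law of $(s_t,a_t)$ starting from $(s_0,a_0)=(s,a)$ and rolling out with $(p,\pi)$, and let $\tilde\rho_{\mask,t}^\pi(\cdot\mid s,a)$ denote the corresponding law under the masked dynamics $p_{\mask}$ defined in the text. Since $r$ is bounded in $(-r_{\max},r_{\max})$, rewriting the $Q$-values as time-discounted sums of expected rewards and applying the standard $\tv$ inequality $|\bE_\mu f - \bE_\nu f| \leq 2r_{\max}\tv(\mu,\nu)$ gives
\begin{equation*}
\bigl|Q^\newpi(s,a) - \tilde Q_{\mask}^\newpi(s,a)\bigr| \;\leq\; 2r_{\max} \sum_{t=0}^{+\infty} \gamma^t\, \tv\!\bigl(\rho_t^\pi(\cdot\mid s,a),\, \tilde\rho_{\mask,t}^\pi(\cdot\mid s,a)\bigr).
\end{equation*}

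Next I would bound each $\tv(\rho_t^\pi,\tilde\rho_{\mask,t}^\pi)$ by a telescoping coupling argument that is now classical for MDPs: writing the one-step transitions as the composition of $\pi$ with the dynamics and using the triangle inequality for total variation, one gets
\begin{equation*}
\tv\!\bigl(\rho_t^\pi,\tilde\rho_{\mask,t}^\pi\bigr) \;\leq\; \sum_{k=0}^{t-1} \bE_{(s_k,a_k)\sim \tilde\rho_{\mask,k}^\pi}\!\bigl[\tv\bigl(p(\cdot\mid s_k,a_k),\,p_{\mask}(\cdot\mid s_k,a_k)\bigr)\bigr].
\end{equation*}
By the definition of $p_{\mask}$, the integrand at step $k$ equals $\tv(p,\tilde p)$ on the event $\{k<H\}$ and equals $0$ on $\{k\geq H\}$. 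On $\{k<H\}$ the mask is $1$ by construction, so that $\tv(p,\tilde p)\leq \epsilon$ by the definition of $\epsilon$. Therefore each summand is at most $\Pr\{k<H\mid \tilde p,\pi,M,s,a\}\cdot \epsilon = w(k;s,a)\cdot\epsilon$.

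Combining the two displays yields $|Q^\pi - \tilde Q_{\mask}^\pi| \leq 2r_{\max}\epsilon \sum_{t=0}^{\infty}\gamma^t \sum_{k=0}^{t-1} w(k;s,a)$, and swapping the order of summation gives
\begin{equation*}
\sum_{t=0}^{\infty}\gamma^t \sum_{k=0}^{t-1} w(k;s,a) \;=\; \sum_{k=0}^{\infty} w(k;s,a)\sum_{t=k+1}^{\infty}\gamma^t \;=\; \frac{\gamma}{1-\gamma}\sum_{k=0}^{\infty}\gamma^k w(k;s,a),
\end{equation*}
which, after dropping the $\gamma\leq 1$ factor, is upper bounded by $\alpha \sum_{t=0}^{\infty}\gamma^t w(t;s,a)$ with $\alpha=\tfrac{2r_{\max}}{1-\gamma}$. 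Taking the lower side of the absolute value gives the claimed inequality.

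The main obstacle I expect is the second step: writing the coupling cleanly enough that the per-step error at time $k$ factors into $w(k;s,a)\cdot \epsilon$. One has to be careful that $H$ is a random stopping time depending on the trajectory, so the conditional expectation $\bE[\tv(p,p_{\mask})\mid \{k<H\}]$ has to be taken under $\tilde\rho_{\mask,k}^\pi$ (the masked law), and that on $\{k<H\}$ the trajectory has been generated entirely by $\tilde p$ so that $M(s_k,a_k)=1$ almost surely — this is what licenses the uniform bound $\tv(p,\tilde p)\leq\epsilon$ at every such step. Once this point is pinned down, the rest is algebraic bookkeeping.
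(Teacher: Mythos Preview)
Your proposal is correct and follows essentially the same route as the paper: bound the per-step transition error under the masked dynamics by $\epsilon\, w(k;s,a)$ via the observation that $p_{\mask}$ differs from $p$ only on the event $\{k<H\}$ (where $M(s_k,a_k)=1$ and hence $\tv(p,\tilde p)\leq\epsilon$), and then feed this into a simulation-lemma/return-gap argument. The only cosmetic difference is that the paper packages the telescoping step and the swap of summation into its Lemma~\ref{lemma:return} and invokes it directly, whereas you unroll that lemma inline; the resulting bound (including the spare factor of $\gamma$ you drop at the end) is identical.
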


The proof is in Appendix B. 
We can extend this result that if the model rollout length is limited to $H_{\max}$ steps, then 
$|Q^\newpi(s,a) - \tilde{Q}_{\mask}^\newpi(s,a)| \leq \alpha\epsilon \sum_{t=0}^{H_{\max}-1} \gamma^t w(t;s,a)$, which grows sub-linearly with $H_{\max}$.
Therefore, for each $(s,a)$ that $M(s,a)=1$, to maximize $Q^\pi(s, a)$, we can instead maximize its lower-bound, which can be rewritten as
\begin{align} 
Q^\pi(s,a)&\geq\bE_{\tau,H}\bigg[  \sum_{t=0}^{H-1}\gamma^t(\tilde r(s_t,a_t) -\alpha\epsilon) + \gamma^H
Q^\newpi(s_H, a_H) ~\bigg\vert~ \tilde p, \pi, M, s,a\bigg] 
\label{eq:Q-bound2}\end{align}
So given the model ($\tilde p$, $\tilde r$), the bound depends on $M$ and $\pi$, which corresponds to \emph{model usage} and \emph{policy optimization}, respectively. To design a practical algorithm, we conclude the following insights.

First, the return for each trajectory is the sum of 1) a model-based $H$-step return subtracted by a penalty of model-bias, and 2) a model-free long-term return. For the first part, as it is model-based, it is always tractable. For the second part, as it is a model-free $Q$-value, we could use model-free RL techniques to takle it. Therefore, we can use a combination of model-based and model-free RL to approximate the masked model rollout.

Second, the results tell us that the gap between model returns and actual returns depend on the model bias in model rollouts instead of that in real rollouts. Standard model-based methods can be understood as $M(s,a)\equiv 1$ for all $(s,a)$ during model rollouts though the model error can be large (or even undefined, because the imaginary state may not be a valid state), so they suffer from large performance gap especially when using long model rollouts. It suggests to restrict model usage to reduce the gap, which leads to our actual algorithm.

Third, as written in Eq. (\ref{eq:Q-bound}), the gap is in the form of $\epsilon\cdot w$. As $\epsilon$ is the maximum error during model rollouts, it is difficult to quantitatively constrain $\epsilon$, if possible. However, we can always control $w$, the proportion of model-based transitions to use. It motivates our algorithm design with a rank-based heuristic, detailed in the next section.

\section{Masked Model-based Actor-Critic}
Following the theoretical results, we propose Masked Model-based Actor-Critic (M2AC). 

\begin{algorithm}[H]
{\small
\caption{Actual algorithm of M2AC}
\label{alg:m2ac}
\begin{algorithmic}[1]
    \STATE \textbf{Inputs:} $K$, $B$, $H_{\max}$, $w$, $\alpha$
    \STATE Random initialize policy $\pi_{\phi}$, Q-function $Q_{\psi}$, and $K$ predictive models $\{p_{\theta_1},\dots, p_{\theta_K}\}$.
    \FOR{each epoch}
        \STATE Collect data with $\pi$ in real environment: $\mathcal{D}_{\textrm{env}} = \mathcal{D}_{\textrm{env}} \cup \{(s_i, a_i, r_i, s'_i)\}_i$
        \STATE Train models $\{p_{\theta_1},\dots, p_{\theta_K}\}$ on $\mathcal{D}_{\textrm{env}}$ with early stopping
        \STATE Randomly sample $B$ states from $\mathcal{D}_{\textrm{env}}$ with replacement: $\bm{S}\leftarrow\{s_i\}_i^B$
        \STATE $\mathcal{D}_{\textrm{model}}\leftarrow$ MaskedModelRollouts($\pi_{\phi}$, $\{p_{\theta_1},\dots, p_{\theta_K}\}$, $\bm{S}$; $H_{\max}$, $w$, $\alpha$) \\
        \STATE Update $\pi_{\phi}$ and $Q_{\psi}$ with off-policy actor-critic algorithm on experience buffer $\mathcal{D}_{\textrm{env}}\cup \mathcal{D}_{\textrm{model}}$
    \ENDFOR
\end{algorithmic}
}%
\end{algorithm}
\vspace{-5mm}
\subsection{Actor-Critic with function approximation}
To replace the Q-values in (\ref{eq:Q-bound2}) with practical function approximation, we first define
\begin{equation}
\tilde Q^\pi_{\textrm{M2AC}}(s,a) = \left\{
\begin{array}{lr}
r(s,a) + \gamma \bE[\tilde Q^\pi_{\textrm{M2AC}}(s',a') \mid s',a'\sim p, \pi], &M(s,a)=0,\\
\tilde r(s,a) - \alpha \epsilon + \gamma \bE[\tilde Q^\pi_{\textrm{M2AC}}(s',a') \mid s',a'\sim \tilde p, \pi], &M(s,a)=1.
\end{array}
\right.
\label{eq:Q-M2AC}\end{equation}
From Theorem \ref{theorem-Q}, it is easy to see that $\tilde Q^\pi_{\textrm{M2AC}}(s,a)$ is a lower-bound of the true Q-function.
\begin{corollary}
$\tilde Q^\pi_{\textrm{M2AC}}(s,a) \leq Q^\pi(s,a)$ for all $(s,a)$. 
\end{corollary}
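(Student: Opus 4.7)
The plan is to interpret Eq.~(\ref{eq:Q-M2AC}) as the fixed-point equation of a Bellman-style operator $\mathcal{T}^\pi_{\textrm{M2AC}}$ on bounded Q-functions and compare it against the standard Bellman operator $(\mathcal{T}^\pi Q)(s,a) = r(s,a) + \gamma\,\bE_{s'\sim p,\,a'\sim\pi}[Q(s',a')]$. It is routine to check that $\mathcal{T}^\pi_{\textrm{M2AC}}$ is a $\gamma$-contraction in sup-norm (so $\tilde Q^\pi_{\textrm{M2AC}}$ is its unique fixed point) and that it is monotone in the pointwise partial order, since both branches in Eq.~(\ref{eq:Q-M2AC}) are expectations and $\gamma<1$.

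The one nontrivial step is the one-step inequality $\mathcal{T}^\pi_{\textrm{M2AC}} Q^\pi \le Q^\pi$. On pairs with $M(s,a)=0$ the two operators agree and the inequality is an equality. On pairs with $M(s,a)=1$, using the standing assumption $\tilde r \equiv r$, the difference simplifies to
\[
(\mathcal{T}^\pi_{\textrm{M2AC}} Q^\pi - Q^\pi)(s,a) \;=\; -\alpha\epsilon \;+\; \gamma\bigl(\bE_{\tilde p}[V^\pi(s')] - \bE_{p}[V^\pi(s')]\bigr),
\]
where $V^\pi(s') = \bE_{a'\sim\pi}[Q^\pi(s',a')]$. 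By the dual form of total variation, the standard bound $\|V^\pi\|_\infty \le r_{\max}/(1-\gamma)$, and $\tv[\tilde p(\cdot|s,a),\,p(\cdot|s,a)] \le \epsilon$ on every masked pair, we obtain $\gamma|\bE_{\tilde p}V^\pi - \bE_{p}V^\pi| \le 2\gamma\|V^\pi\|_\infty\,\epsilon \le \gamma\alpha\epsilon$, so the difference is at most $(\gamma-1)\alpha\epsilon \le 0$. This is exactly the local version of the telescoping estimate behind Theorem~\ref{theorem-Q}, and it is where the calibration $\alpha = 2r_{\max}/(1-\gamma)$ is used.

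From here the conclusion is bookkeeping. By monotonicity of $\mathcal{T}^\pi_{\textrm{M2AC}}$, if $Q_n \le Q^\pi$ then $Q_{n+1} := \mathcal{T}^\pi_{\textrm{M2AC}} Q_n \le \mathcal{T}^\pi_{\textrm{M2AC}} Q^\pi \le Q^\pi$, so starting from $Q_0 = Q^\pi$ and iterating gives $(\mathcal{T}^\pi_{\textrm{M2AC}})^n Q^\pi \le Q^\pi$ for every $n$. Passing to the limit via Banach's fixed-point theorem yields $\tilde Q^\pi_{\textrm{M2AC}} \le Q^\pi$. The main obstacle in the argument is the one-step TV bound, which is where the penalty $\alpha\epsilon$ must precisely dominate the one-step model discrepancy; everything else is standard dynamic-programming machinery. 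An alternative route would be to apply Theorem~\ref{theorem-Q} on each maximal segment of the M2AC process between consecutive $M=0$ visits and sum the contributions, but the operator-comparison proof above is cleaner and sidesteps any bookkeeping about stopping-time distributions.
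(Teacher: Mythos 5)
Your proof is correct, and it takes a genuinely different route from the paper, which offers no argument beyond ``from Theorem~\ref{theorem-Q} it is easy to see.'' The one-step estimate is right: on masked pairs, $\gamma\lvert\bE_{\tilde p}[V^\pi]-\bE_{p}[V^\pi]\rvert \le 2\gamma\,\tv[\tilde p,p]\,\lVert V^\pi\rVert_\infty \le \gamma\,\epsilon\,\tfrac{2r_{\max}}{1-\gamma}=\gamma\alpha\epsilon$, so the penalty $-\alpha\epsilon$ dominates and $\mathcal{T}^\pi_{\textrm{M2AC}}Q^\pi\le Q^\pi$; monotonicity plus the contraction property then close the argument. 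In fact your operator-comparison proof does more than reproduce the paper's implicit reasoning --- it patches a gap in it. Theorem~\ref{theorem-Q} is stated for the stopping-time construction $\tilde Q^\pi_{\mask}$, in which the rollout switches to the true dynamics once and never returns to the model, whereas the recursion in Eq.~(\ref{eq:Q-M2AC}) re-evaluates $M$ at every step and can alternate between $\tilde p$ and $p$ arbitrarily; so the corollary is not literally an instance of Theorem~\ref{theorem-Q}, and making the paper's route rigorous would require exactly the segment-by-segment decomposition you sketch at the end. What the paper's approach buys is a quantitative two-sided gap bound (Eq.~(\ref{eq:Q-bound})) tied to the proportions $w(t;s,a)$, which motivates the algorithm; what your approach buys is a clean, self-contained, and actually valid proof of the one-sided domination for the object the algorithm really uses. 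The only point worth making explicit is that Eq.~(\ref{eq:Q-M2AC}) defines $\tilde Q^\pi_{\textrm{M2AC}}$ only implicitly, so you should say up front (as you essentially do) that existence and uniqueness of the fixed point come from the contraction property before you identify the limit of the iterates with it.
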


The recursive form of Eq. (\ref{eq:Q-M2AC}) indicates that, when $M(s,a)=0$, we should use standard model-free Q-update with samples of real transitions $(s,a,r,s')$; when $M(s,a)=1$, we can perform a model-based update with model-generated data $(s,a,\tilde r-\alpha \epsilon, \tilde s')$. 
Such a hybrid Temporal Difference learning can be performed directly via experience replay \cite{adam2012experience}. As noted in Algorithm \ref{alg:m2ac}, we use an experience buffer $\mathcal{D}_{\textrm{env}} \cup \mathcal{D}_{\textrm{model}}$, where $ \mathcal{D}_{\textrm{model}}$ is generated with model rollouts (detailed in Algorithm \ref{alg:2}). In particular, as the maximum model error $\epsilon$ is intractable to obtain, we follow a similar technique of TRPO \cite{schulman2015trpo} to replace it by a sample-based estimation of model error $u(s,a)$, detailed in Section \ref{sec:ensemble}. Then we can use off-the-shelf actor-critic algorithms to update the Q-functions as well as the policy. In our implementation, we use Soft-Actor-Critic (SAC) \cite{Haarnoja2018soft} as the base algorithm.

\subsection{The masking mechanism}
Designing a reasonable masking mechanism plays a key role in our method.
On the one hand, given a model $\tilde p$, we should restrict the mask to only exploit a small ``safe zone'' so that $\epsilon$ can be small. On the other hand, the mask should not be too restricted, otherwise the agent can hardly learn anything (e.g., $M(s,a)\equiv 0$ turns off the model-based part). 

We propose a simple rank-based filtering heuristic as the masking mechanism. Consider that the agent runs a minibatch of model rollouts in parallel and has generated a minibatch of transitions $\{(s_i,a_i,r_i,s_i')\}_i^B$, where each transition is assigned with an uncertainty score $u_i\approx \tv[\tilde p(\cdot|s_i,a_i), p(\cdot|s_i,a_i)]$ as the estimated model error at input $(s_i,a_i)$. We rank these rollout samples by their uncertainty score, and only keep the first $wB$ samples with small uncertainty. Here $w\in (0,1]$ is a hyper-parameter of the masking rate. The overall model-based data generation process is outlined in Algorithm \ref{alg:2}.

Such a masking mechanism has a few advantages. First, by setting this proportion we can control the performance gap in Eq. (\ref{eq:Q-bound}) (although not controlled explicitly, the term $w(t;s,a)$ in average would be close to $w$). Second, 
as it is rank-based, we only need to find a predictor $u(s,a)$ that is positively correlated to $\tv[\tilde p(s',r|s,a) , p(s',r|s,a)]$. Last but not least, it is simple and easy to implement. 
  
Specifically, we have two rollout modes, the hard-stop mode and the non-stop mode, illustrated in Figure \ref{fig:stopmodes}. Both modes rollout for at most $H_{\max}$ steps from the initial states. Hard-stop mode stops model rollouts once it encounters an $(s,a)$ that $M(s,a)=0$; Non-stop mode always runs $H_{\max}$ steps and only keeps the samples that has $M(s,a)=1$. 

\begin{minipage}{0.595\textwidth}
%\vspace{-5mm}
\begin{algorithm}[H]
\caption{Masked Model Rollouts for data generation}
\label{alg:2}
{\small
\begin{algorithmic}[1]
    \STATE \textbf{Inputs:} policy $\pi$, $K$ models $\{p_{\theta_1},\dots, p_{\theta_K}\}$, initial states $\bm{S}$, \\max rollout length $H_{\max}$, masking rate $w$, penalty coef. $\alpha$\\
    \STATE Initialize $B\leftarrow|\bm{S}|$, $\mathcal{D}_{\textrm{model}}\leftarrow\emptyset$.
    \FOR{$h=0,\dots,H_{\max}-1$}
      \FOR{$i=1,\dots,B$}
        \STATE Choose action $a_{i} \leftarrow \pi(s_{i})$\\
        \STATE Compute $p_{\theta_k}(r, s'|s_{i}, a_{i})$ for $k=1,\dots,K$
        \STATE Randomly choose $k$ from $1,\dots,K$
        \STATE Draw $\tilde r_{i}, s'_{i} \sim p_{\theta_k}(r, s'|s_{i}, a_{i})$
        \STATE Compute One-vs-Rest uncertainty estimation $u_{i}$ by (\ref{eq:ovr})
      \ENDFOR
      
      \STATE Rank the samples by $u_i$. Get $\lfloor wB\rfloor$ samples' indexes $\{i_j\}_j^{\lfloor wB\rfloor}$ with smaller uncertainty scores
      \STATE $\mathcal{D}_{\textrm{model}}\leftarrow \mathcal{D}_{\textrm{model}}\cup \{(s_{i_j}, a_{i_j}, \tilde r_{i_j} - \alpha u_{i_j}, s'_{i_j})\}_j^{\lfloor wB\rfloor}$\\
      \STATE (Non-stop mode) $\bm{S} \leftarrow \{s'_{i}\}_i^B$\\
      \STATE (Hard-stop mode, \textit{optional}) $\bm{S} \leftarrow \{s_{i_j}\}_j^{\lfloor wB\rfloor}$; $B\leftarrow \lfloor wB\rfloor$
    \ENDFOR
    \STATE \textbf{Return:} $\mathcal{D}_{\textrm{model}}$
\end{algorithmic}
}%
\end{algorithm}
\end{minipage}
\quad
\begin{minipage}{0.37\textwidth}
\begin{figure}[H]
%\vspace{-5mm}
  \begin{center}
 \includegraphics[width=0.85\textwidth]{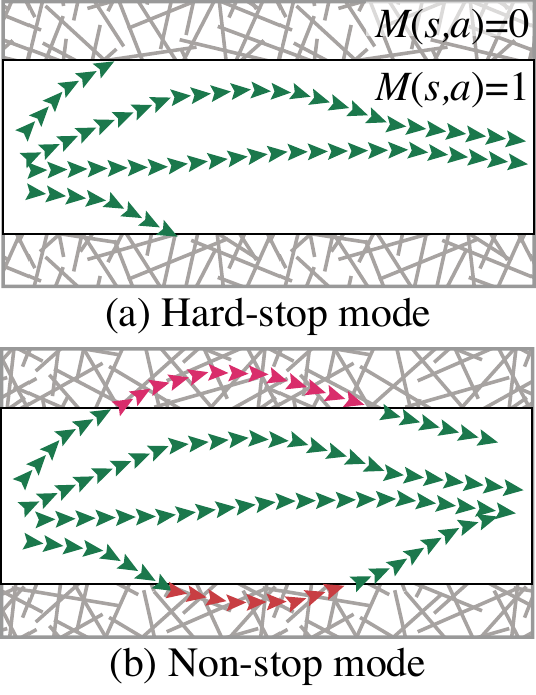}
  \end{center}
  \vspace{-3mm}
  \caption{Rollout modes in M2AC.}
  \label{fig:stopmodes}
  \end{figure}
  \end{minipage}
\subsection{Probabilistic model ensemble and model error prediction}\label{sec:ensemble}
Now the only obstacle is how to train a model that can provide the uncertainty estimation. There are three requirements. First, to deal with stochastic dynamics, it should be able to learn the \emph{aleatoric} uncertainty in the dynamics. Second, to prevent overfitting in low-data scenario, it should keep track of the \emph{epistemic} uncertainty (aka the model uncertainty). Finally, the model should be capable of predicting an uncertainty score $u(s,a)$ 
that is positively correlated to $\tv[\tilde p(s',r|s,a) , p(s',r|s,a)]$.

In our work, we use an ensemble of probabilistic neural networks as the model. Similar to \cite{kurutach2018modelensemble,buckman2018steve,janner2019mbpo}, we train a bag of $K$ models $\{ p_{\theta_1},\cdots,p_{\theta_K} \}$, $K\geq 2$. Each model $p_{\theta_k}(r,s'|s,a)$ is a probabilistic neural network that outputs a distribution over $(r, s')$. All the models are trained in parallel with different data shuffling and random initialization. For inference, the agent randomly chooses one model $p_{\theta_k}$ to make a prediction. In this way, both aleatoric and model uncertainties can be learned.

Then we propose One-vs-Rest (OvR) uncertainty estimation, a simple method to estimate model errors with model ensembles. Given $(s, a)$ and a chosen model index $k$, we define the estimator as:
\begin{equation}
u_k(s, a) =  \kl[p_{\theta_k}(\cdot|s,a) \Vert p_{-k}(\cdot|s,a)],
\label{eq:ovr}\end{equation}
where $p_{-k}(s',r|s,a)$ is the ensembled prediction of the other $K-1$ models except model $k$. It measures the disagreement between one model versus the rest of the models. Therefore, a small $u_k(s,a)$ indicates that the model is confident about its prediction.

As a concrete example, for tasks with continuous states and actions, we set the model output as diagonal Gaussian, i.e., $p_{\theta_i}(s',r|s, a) = \mathcal{N} (\mu_{\theta_i}(s, a),\sigma^2_{\theta_i}(s,a))$. We use negative log-likelihood (NLL) as the loss function for training as well as for early-stopping on a hold-out validation set. For OvR uncertainty estimation, we also use diagonal Gaussian as $ p_{-k}(s',r|s,a)$ by merging multiple Gaussian distributions, i.e.,
$p_{\theta_{-k}}(s',r|s,a) = \mathcal{N} (\mu_{-k}(s, a),{\sigma}^2_{-k}(s, a))$ with
$\mu_{-k}(s, a) = \frac{1}{K-1} \sum_{i \neq k}^{K} \mu_{\theta_i}(s, a)$ and $
{\sigma}^2_{-k}(s, a) = \frac{1}{K-1} \sum_{i \neq k}^{K} \big(\sigma^2_{\theta_i}(s, a)+\mu^2_{\theta_i}(s, a)\big) - \mu^2_{-k}(s, a).$
Then it is simple to compute the KL-divergence between two Gaussians as the uncertainty score
\begin{equation}
u_k(s, a) =  \kl[\mathcal{N} (\mu_{\theta_i}(s, a),\sigma^2_{\theta_i}(s,a)) \,\Vert\, \mathcal{N} (\mu_{-k}(s, a),{\sigma}^2_{-k}(s, a))].
\label{eq:ovr}\end{equation}

\section{Experiments}
Our experiments are designed to investigate two questions: (1) How does M2AC compare to prior model-free and model-based methods in sample efficiency? (2) Is M2AC stable and robust across various settings? How does its robustness compare to existing model-based methods?

\begin{figure}
    \centering
    \includegraphics[width=\textwidth]{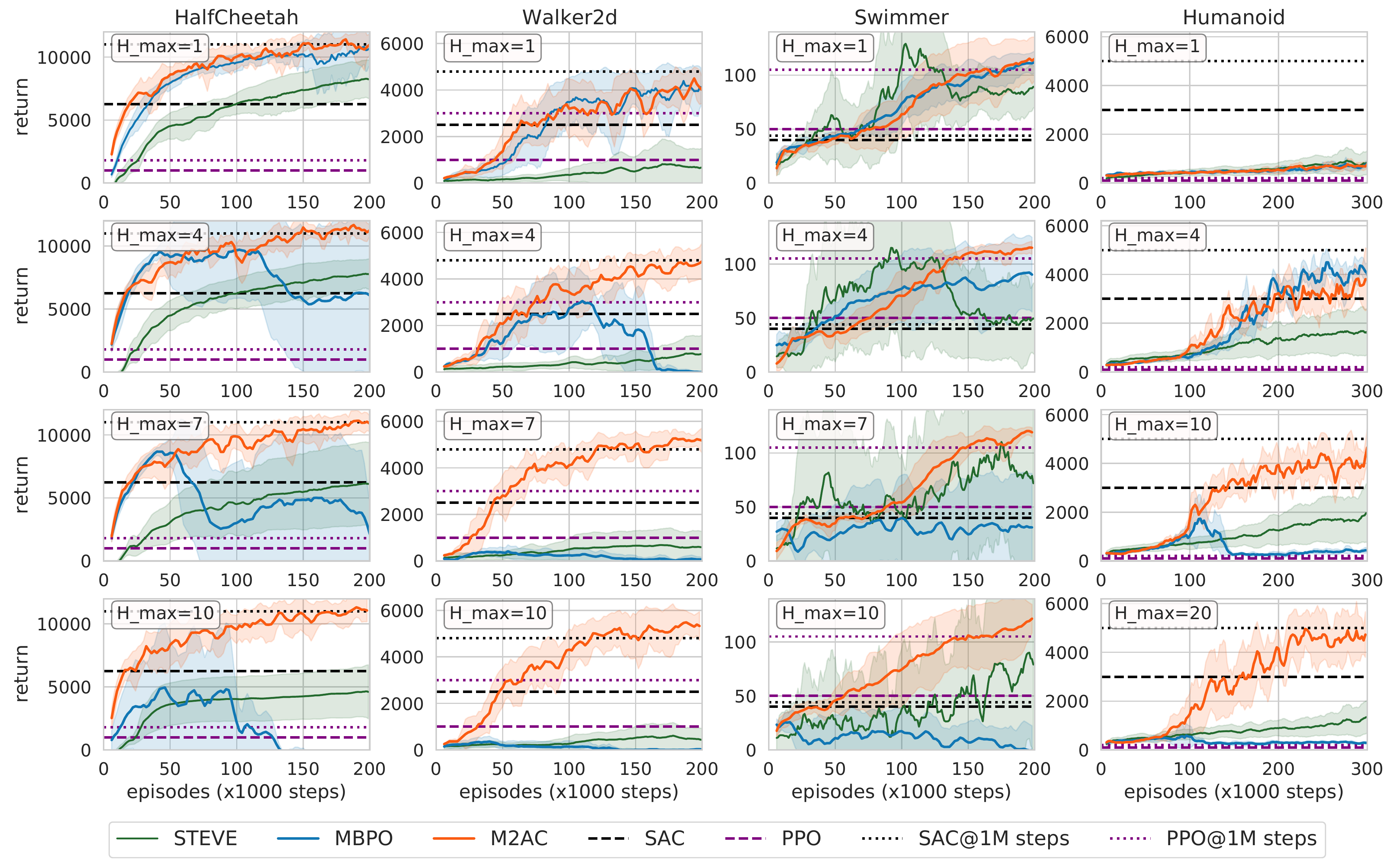}
    \vspace{-3mm}
    \caption{Training curves on MuJoCo-v2 benchmarks. Solid curves are average returns over 7 runs, and shaded areas indicate one standard deviation. The dashed lines are the returns of SAC and PPO at the maximum training steps (200k steps for the first three tasks, and 300k steps for Humanoid-v2), and the dotted lines are the returns at 1 million steps (retrived from the original papers \cite{haarnoja18sac,schulman2017ppo} and opensourced benchmarks \cite{baselines,SpinningUp2018}).}
    \vspace{-3mm}
    \label{fig:mujoco}
\end{figure}

\subsection{Baselines and implementation}
Our baselines include model-free methods PPO \cite{schulman2017ppo} and SAC \cite{haarnoja18sac}, and two state-of-the-art model-based methods STEVE \cite{buckman2018steve} and MBPO \cite{janner2019mbpo}. 
In the implementation of MBRL methods, the most important hyper-parameter is the maximum model rollout length $H_{\max}$. In MBPO \cite{janner2019mbpo} the authors test and tune this parameter for every task, but we argue that in real applications we can not know the best choice in advance nor can we tune it. So for a fair and comprehensive comparison, we test $H_{\max}= 1, 4, 7, 10$ for all the algorithms and tasks. 

To ensure a fair comparison, we run M2AC and MBPO with the same network architectures and training configurations based on the opensourced implementation of MBPO (detailed in the appendix). For the comparison with STEVE, there are many different compounding factors (e.g., M2AC and MBPO use SAC as the policy optimizer while STEVE uses DDPG) so we just test it as is.

The main experiments involve the following implementation details for M2AC: the masking uses the non-stop mode (Algorithm \ref{alg:2}, line 12). The masking rate is set as $w=0.5$ when $H_{\max}=1$, and is a decaying linear function $w_h = \frac{H_{\max}-h}{2(H_{\max}+1)}$ when $H_{\max}>1$. The model error penalty is set as $\alpha=10^{-3}$. Discussions and ablation studies about these parameters are in Section \ref{sec:ablation}.

\subsection{Experiments on continuous control benchmarks with deterministic dynamics}
Figure \ref{fig:mujoco} demonstrates the results in four MuJoCo-v2 \cite{todorov2012mujoco} environments. We observe that M2AC can consistently achieve a high performance with a small number of interactions. Specifically, M2AC performs comparably to the best model-free method with about $5\times$ fewer samples. 

Comparing to MBPO, when using $H_{\max}=1$ (recommended by MBPO \cite{janner2019mbpo}), M2AC achieves a comparable score to MBPO, which is easy to understand that the model is almost certain about its prediction in such a short rollout. When $H_{\max}>1$, we observe that the performance of MBPO drops rapidly due to accumulated model errors. On the contrary, M2AC can even benefit from longer model rollout, which is appealing in real applications. 

Comparing to STEVE, we see that M2AC can achieve a higher return (except in Swimmer, 
because STEVE bases on DDPG while M2AC bases on SAC and it is known that DDPG performs much better in Swimmer than SAC), and is significantly more stable. Notably, although STEVE seems not as competitive as MBPO in their best parameters, it outperforms MBPO when using long rollout lengths, most likely because of its reweighting mechanism. Therefore, we conclude that using uncertainty-based technique indeed enhances generalization in long model rollouts, and our M2AC is clearly more reliable than all the competitors in the tested benchmarks.

\subsection{Experiments in noisy environments}
In order to understand the behavior and robustness of MBRL methods in a more realistic setting, in this section we conduct experiments of noisy environments with a very few interactions, which is challenging for modern deep RL. The noisy environments are set-up by adding noises to the agent's action at every step, i.e., each step the environment takes $a' = a + \mathrm{WN}(\sigma^2)$ as the action, where $a$ is the agent's raw action and $\mathrm{WN}(\sigma^2)$ is an unobservable Gaussian white noise with $\sigma$ its standard deviation. We implement noisy derivatives based on HalfCheetah and Walker2d, each with three levels: $\sigma=0.05$ for ``-Noisy0'', $\sigma=0.1$ for ``-Noisy1'', and $\sigma=0.2$ for ``-Noisy2''. Since STEVE is not so competitive in the deterministic tasks, we do not include it for comparison. 

The results are shown in Figure \ref{fig:noisy}. Despite the noisy dynamics are naturally harder to learn, we see that M2AC performs robustly in all the environments and significantly outperforms MBPO. In HalfCheetah-based tasks, M2AC is robust to the rollout length even in the most difficult Noisy2 environment. In Walker2d-based tasks, we observe that the return of M2AC also drops when the rollout length is longer. We attribute the reason to the fact that Walker2d is known to be more difficult than HalfCheetah, so the performance of a MBRL algorithm in Walker2d is more sensitive to model errors. 
But overall, comparing to MBPO which can merely learn anything from noisy Walker2d environments, M2AC is significantly more robust.

\begin{minipage}{0.578\textwidth}
\begin{figure}[H]
    \centering
    \includegraphics[width=\textwidth]{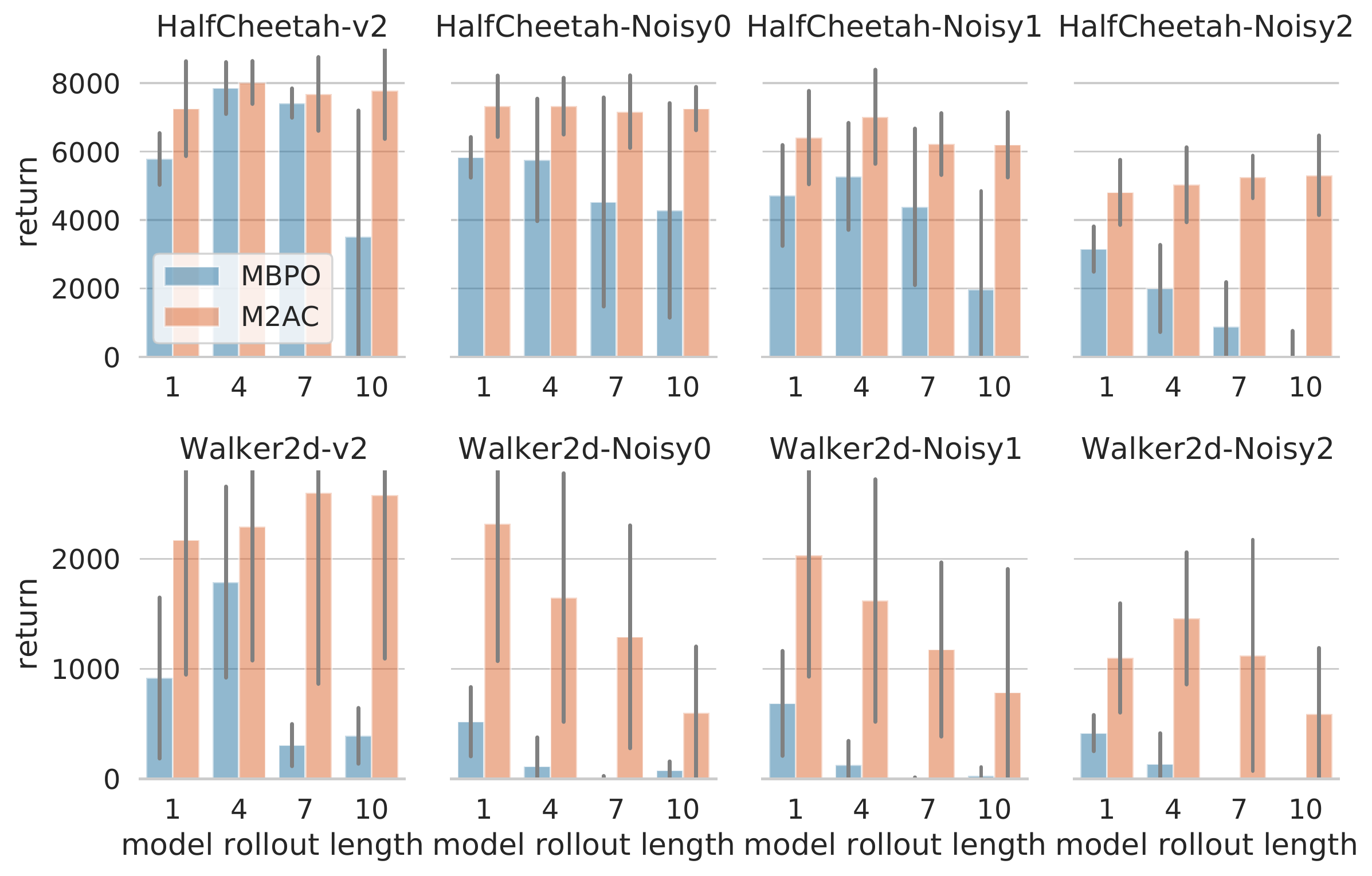}\vspace{-3mm}
    \caption{Results in noisy environments with very few interactions (25k steps for HalfCheetah and 50k steps for Walker2d). The left-most column is the deterministic benchmarks, the other three columns are the noisy derivatives. The bars are average returns over 10 runs and error lines indicate one standard deviation.}
    \label{fig:noisy}
\end{figure}
\end{minipage}\quad
\begin{minipage}{0.402\textwidth}
\begin{figure}[H]
    \centering
    \includegraphics[width=\textwidth]{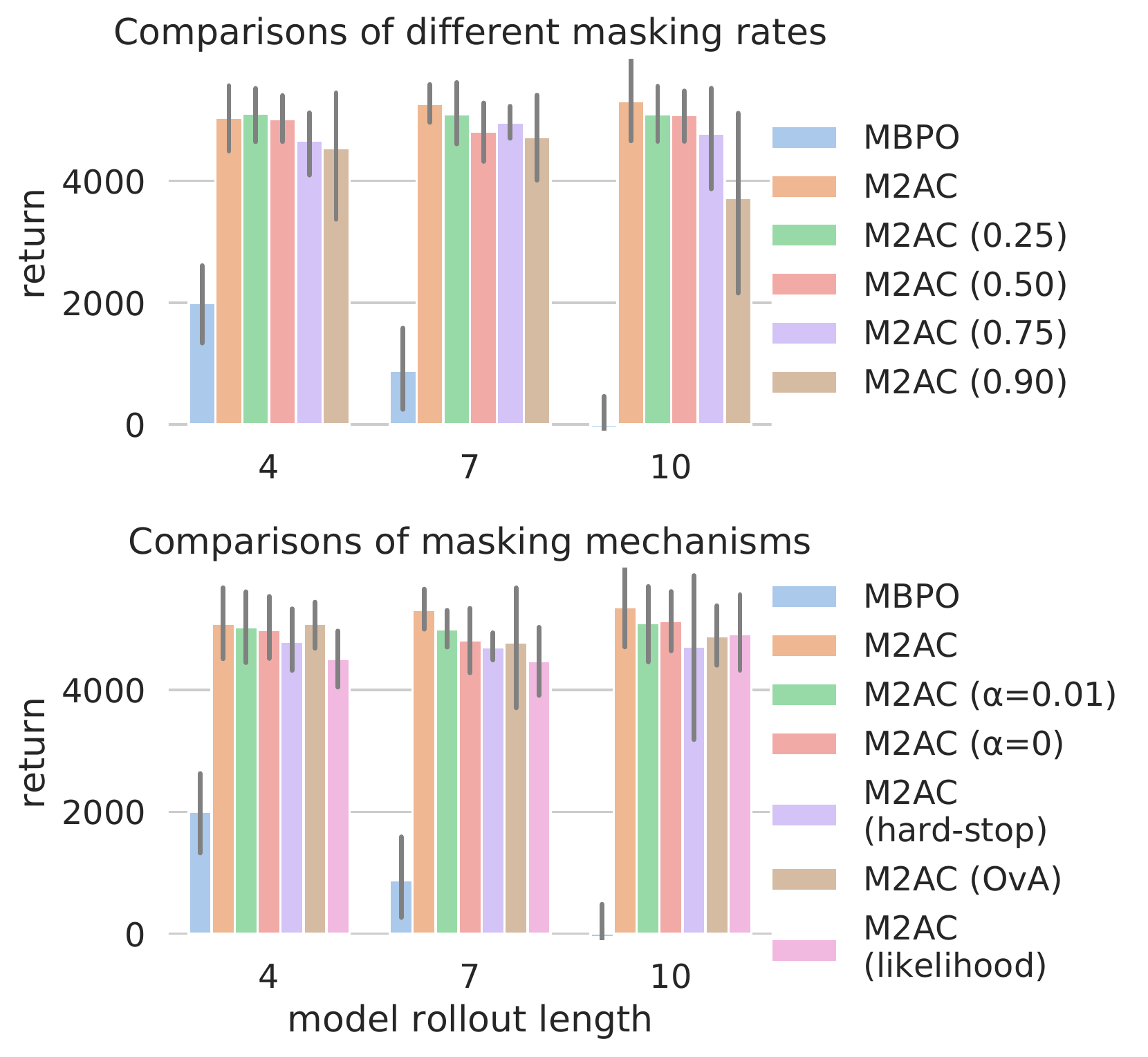}\vspace{-3mm}
    \caption{Ablation studies (in HalfCheetah-Noisy2). Up: comparisons between masking rates. Down: comparisons of masking mechanisms. Error bars are 90\% bootstrap confidence intervals.}
    \label{fig:ablation}
\end{figure}
\end{minipage}

\subsection{Ablation studies}\label{sec:ablation}
Our first ablation study is about the most important hyper-parameter, the \textbf{masking rate} $w$. The result is shown in the upper figure of Figure \ref{fig:ablation}. It show that, surprisingly, our method is not so sensitive to the choice of $w$. Only setting $w$ too close to one ($w=0.9$) in long model rollouts degrades the performance, indicating that smaller $w$ is better when the rollout step increases. Also, we see that $w=0.25$ works fairly well. Therefore, we recommend the linear decaying masking rate $w_h = \frac{H_{\max}-h}{2(H_{\max}+1)}$ whose average $\frac{1}{H_{\max}}\sum_{h=0}^{H_{\max}-1}w_h$ is 0.25. We observe that it performs robustly across almost all settings.

Then we conducted three ablation studies about the masking mechanism, which are demonstrated together in the lower figure of Figure \ref{fig:ablation}.

On the \textbf{penalty coefficient} $\alpha$, we observe that M2AC with $\alpha=0.001$ performs the best among $[0.01, 0.001, 0]$. To explain, too small an $\alpha$ encourages model exploitation, i.e., the policy can overfit to the model. Too large an $\alpha$ encourages exploration in unfamiliar areas of the real environment but discourages the use of the model. Therefore, a moderate choice such as $\alpha=0.001$ could be better. 

On the \textbf{multi-step masked rollout mode}, we find the non-stop mode better than the hard-stop mode (while the later runs faster), mainly because the non-stop mode leads to a substantially richer distribution of imaginary transitions, which enhances the  generalization power of the actor-critic part.

Finally, for \textbf{uncertainty estimation}, we compare two other methods: 1) One-vs-All (OvA) disagreement \cite{lakshminarayanan2017simple}, which simply replaces the leave-one-out ensemble in our OvR estimation with a complete ensemble of $K$ models. 2) Negative likelihood, which estimates the likelihood of the sampled transition $(s,a,\tilde r, \tilde s')$ in the predicted ensemble distribution as a measure of confidence. The results show that our OvR uncertainty estimation outperforms the others. 
We attribute it to that OvR is better at measuring the disagreement between the models, which is crucial in low-data regimes. 

\section{Conclusion}
In this paper, we derive a general performance bound for model-based RL and theoretically show that the divergence between the return in the model rollouts and that in the real environment can be reduced with restricted model usage. Then we propose Masked Model-based Actor-Critic which incorporates model-based data generation with model-free experience replay by using the imaginary data with low uncertainty. By extensive experiments on continuous control benchmarks, we show that M2AC significantly outperforms state-of-the-art methods with enhanced stability and robustness even in challenging noisy tasks with long model rollouts. Further ablation studies show that the proposed method is not sensitive to its hyper-parameters, thus it is reliable to use in realistic scenarios.

\section*{Broader impact}
This paper aims to make reinforcement learning more reliable in practical use. We point out that previous methods that work well under ideal conditions can have a poor performance in a more realistic setting (e.g., in a noisy environment) and are not robust to some hyper-parameters (e.g., the model rollout length). We suggest that these factors should be taken into account for future work. We improve the robustness and generalization ability by restricting model use with a notion of uncertainty. Such an insight can be universal in all areas for building reliable Artificial Intelligence systems.

\section*{Acknowledgements}
The research is supported by National Key Research and Development Program of China under Grant No. 2017YFB1002104, National Natural Science Foundation of China under Grant No. U1811461.

{\small
\bibliography{m2ac}
\bibliographystyle{plain}
}%

\clearpage

\appendix
\section{Appendix: Lemmas}
\begin{lemma}[Total-variance divergence of joint distributions]\label{lemma:tv-joint}
Given two joint distributions $p(x, y) = p(y|x)p(x)$ and $\tilde p(x, y) = \tilde p(y|x) \tilde p(x)$, the total-variance divergence can be bounded by
\begin{equation}
	\tv[p(x, y), \tilde p(x, y)] \leq \tv[p(x), \tilde p(x)] + \bE_{x\sim \tilde p}[\tv[p(y|x), \tilde p(y|x)]]
\end{equation}
\end{lemma}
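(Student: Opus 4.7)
The plan is to expand the joint densities using the chain rule and apply a standard add-and-subtract trick followed by the triangle inequality. Start from the definition
\[
\tv[p(x,y),\tilde p(x,y)] = \tfrac{1}{2}\iint |p(y|x)p(x) - \tilde p(y|x)\tilde p(x)|\,dy\,dx.
\]
The key algebraic step is to rewrite the integrand by inserting the mixed term $\tilde p(x)p(y|x)$:
\[
p(y|x)p(x) - \tilde p(y|x)\tilde p(x) = p(y|x)\bigl(p(x) - \tilde p(x)\bigr) + \tilde p(x)\bigl(p(y|x) - \tilde p(y|x)\bigr).
\]
Here I have to be a little careful about which side I choose to carry the outer $\tilde p(x)$ versus $p(x)$; taking the expectation with respect to $\tilde p$ in the statement forces the second term to have $\tilde p(x)$ as its weight, which is exactly what the splitting above achieves.

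Next I would apply the triangle inequality inside the absolute value, split the double integral into two pieces, and evaluate each. For the first piece, the factor $p(y|x)$ is a conditional density so $\int p(y|x)\,dy = 1$, which collapses the $y$-integral and gives $\tfrac{1}{2}\int |p(x)-\tilde p(x)|\,dx = \tv[p(x),\tilde p(x)]$. For the second piece, the factor $\tilde p(x)$ pulls out of the inner integral, and $\tfrac{1}{2}\int |p(y|x)-\tilde p(y|x)|\,dy$ is by definition $\tv[p(y|x),\tilde p(y|x)]$; wrapping this in the outer $\int \tilde p(x)(\cdot)\,dx$ yields $\bE_{x\sim\tilde p}[\tv[p(y|x),\tilde p(y|x)]]$. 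Summing the two pieces gives the claimed bound.

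I do not anticipate a genuine obstacle here — the result is a textbook consequence of conditioning plus the triangle inequality. The only minor subtlety worth flagging in the write-up is the asymmetric role of $p$ and $\tilde p$ in the splitting: the bound as stated takes the expectation under $\tilde p$, and by swapping the mixed term one obtains the dual bound with expectation under $p$ instead. For discrete state/action spaces the same argument goes through with sums replacing integrals, so the lemma applies directly to the MDP setting used later (e.g., for bounding $\tv$ between step-$t$ state-action occupancies in the proof of Theorem~1 and Theorem~\ref{theorem-Q}).
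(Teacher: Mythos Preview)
Your proposal is correct and follows exactly the same approach as the paper: the identical add-and-subtract of the mixed term $p(y|x)\tilde p(x)$, triangle inequality, then collapsing the $y$-integral in the first piece via $\int p(y|x)\,dy=1$ and recognizing the second piece as $\bE_{x\sim\tilde p}[\tv[p(y|x),\tilde p(y|x)]]$. The only cosmetic difference is that the paper writes sums where you write integrals, which you already note is immaterial.
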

\begin{proof}
\begin{align*}
    \tv[p(x, y), \tilde p(x, y)] &= \half\sum_{x,y} \vert p(x,y) - \tilde p(x, y) \vert \\
    &= \half\sum_{x,y} \vert p(y|x)p(x) - \tilde p(y|x) \tilde p(x)\vert \\
    &= \half\sum_{x,y} \vert p(y|x)p(x) - p(y|x)\tilde p(x) +p(y|x)\tilde p(x) - \tilde p(y|x) \tilde p(x)\vert \\
    &\leq \half\sum_{x,y} p(y|x)\vert p(x) - \tilde p(x) \vert + \half\sum_{x,y} \tilde p(x)\vert p(y|x) - \tilde p(y|x) \vert \\
    &= \half\sum_{x} \vert p(x) - \tilde p(x) \vert + \half \sum_{x} \tilde p(x)\sum_{y}\vert p(y|x) - \tilde p(y|x) \vert \\
    &= \tv[p(x), \tilde p(x)] + \bE_{x\sim \tilde p}[\tv[p(y|x), \tilde p(y|x)]]
\end{align*}
\end{proof}

\begin{lemma}[Total-variance divergence of two Markov chains]\label{lemma:tv-mc}
Given two Markov chain $p_{t+1}(x') = \sum_x p(x'|x)p_t(x)$ and $\tilde p_{t+1}(x') = \sum_x \tilde p(x'|x)\tilde p_t(x)$, we have
\begin{equation}
\tv[p_{t+1}(x'), \tilde p_{t+1}(x')] \leq \tv[p_t(x), \tilde p_t(x)] + \bE_{x\sim \tilde p_t}[\tv[p(x'|x), \tilde p(x'|x)]].
\end{equation}
\end{lemma}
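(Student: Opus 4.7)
The plan is to reduce Lemma 2 to Lemma 1 by viewing each Markov chain step as defining a joint distribution on $(x, x')$ and then marginalizing. Concretely, define the joint distributions $p(x, x') \triangleq p(x'|x) p_t(x)$ and $\tilde p(x, x') \triangleq \tilde p(x'|x) \tilde p_t(x)$. The one-step chain distributions $p_{t+1}(x')$ and $\tilde p_{t+1}(x')$ are precisely the marginals of these joints over $x'$.

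First I would invoke the standard fact that total-variation distance is non-increasing under marginalization (equivalently, under any deterministic function of the sample, and marginalizing is projection onto the $x'$ coordinate). This yields
\[
\tv[p_{t+1}(x'),\tilde p_{t+1}(x')] \;\leq\; \tv[p(x,x'),\tilde p(x,x')].
\]
This inequality follows from a short calculation: $|\sum_x p(x,x') - \sum_x \tilde p(x,x')| \leq \sum_x |p(x,x') - \tilde p(x,x')|$ by the triangle inequality, and summing over $x'$ and halving gives the bound.

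Next I would apply Lemma 1 to the joint distributions just defined, with the roles $(x,y) \leftrightarrow (x, x')$ and conditionals $p(y|x) = p(x'|x)$, $\tilde p(y|x) = \tilde p(x'|x)$. Lemma 1 immediately yields
\[
\tv[p(x,x'),\tilde p(x,x')] \;\leq\; \tv[p_t(x),\tilde p_t(x)] + \bE_{x\sim \tilde p_t}\!\left[\tv[p(x'|x),\tilde p(x'|x)]\right].
\]
Chaining this with the marginalization bound from the first step gives exactly the claimed inequality.

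I do not expect any real obstacle: the proof is essentially a two-line reduction, and the only non-routine ingredient is the data-processing / marginalization property of $\tv$, which is standard. The one thing to be careful about is that the expectation on the right-hand side is taken under $\tilde p_t$ (not $p_t$), which is consistent with the asymmetric form in Lemma 1; swapping the roles would give a valid but different bound, so I would keep the convention aligned with the statement.
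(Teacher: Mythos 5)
Your proof is correct and matches the route the paper intends: the paper states this lemma without proof as an immediate consequence of Lemma~\ref{lemma:tv-joint}, obtained exactly as you do by forming the joint $p(x'|x)p_t(x)$, applying Lemma~\ref{lemma:tv-joint}, and using that total variation does not increase under marginalization. Your explicit justification of the marginalization step via the triangle inequality is a welcome addition, and your note about keeping the expectation under $\tilde p_t$ is the right convention.
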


Given policy $\pi$ and transition $p$, we denote the density of state-action after $t$ steps of transitions from $s$ as $p_t^\pi(s_t, a_t|s)$. Specifically, $p_0^\pi(s, a|s)=\pi(a|s)$. 
For simplicity, when starting from $p_0(s_0)$, we let $p_t^\pi(s_t, a_t)=\bE_{s_0\sim p_0}[p_t^\pi(s_t, a_t|s_0)]$.

\begin{lemma}[Return gap]\label{lemma:return}
Consider running two policies of $\pi$ and $\tilde \pi$ in two dynamics $p$ and $\tilde p$, respectively. Let
$D_{\textrm{TV}}^{\max}(\pi, \tilde\pi)\triangleq \max_s \tv[\oldpi(\cdot|s), \newpi(\cdot|s)]$,
$\delta^{(0)}(\tilde p, p) \triangleq\tv[p_0(s),  \tilde p_0(s)]$,
$\delta^{(t)}_{\tilde p, \tilde\pi}(\tilde p, p) \triangleq \bE_{s,a\sim \tilde p_{t-1}^{\tilde\pi}}[\tv[p(s'|s,a), \tilde p(s'|s,a)]]$ for $t\geq 1$.
Then the discounted returns up to step $T$ are bounded as 
\begin{align}
|J_T(\pi) - \Jmodel_T(\tilde\pi)| \leq 2r_{\max} \bigg(\frac{1}{(1-\gamma)^2}D_{\textrm{TV}}^{\max}(\pi, \tilde\pi) + \frac{1}{1-\gamma}\sum_{t=0}^T \gamma^t \delta^{(t)}_{\tilde p, \tilde\pi}(\tilde p, p) \bigg).
\label{eq:return_gap}
\end{align}
\end{lemma}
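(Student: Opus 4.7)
The plan is to reduce the return gap to a weighted sum of total-variation distances between time-$t$ state-action occupancy measures, and then control those TV distances by induction using Lemmas \ref{lemma:tv-joint} and \ref{lemma:tv-mc}. Concretely, I will start from the identity
\[
J_T(\pi)-\tilde J_T(\tilde\pi)=\sum_{t=0}^T\gamma^t\bigl(\bE_{p_t^\pi}[r(s,a)]-\bE_{\tilde p_t^{\tilde\pi}}[r(s,a)]\bigr),
\]
apply the standard bound $|\bE_\mu f-\bE_\nu f|\leq 2\|f\|_\infty\tv[\mu,\nu]$ with $\|r\|_\infty\leq r_{\max}$, and conclude $|J_T(\pi)-\tilde J_T(\tilde\pi)|\leq 2r_{\max}\sum_{t=0}^T\gamma^t\,\tv[p_t^\pi(s,a),\tilde p_t^{\tilde\pi}(s,a)]$.

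Next I would build a one-step recursion for $\Delta_t\triangleq\tv[p_t^\pi(s,a),\tilde p_t^{\tilde\pi}(s,a)]$. Writing the joint as state-marginal times policy and applying Lemma \ref{lemma:tv-joint} twice gives
\[
\Delta_t\leq \tv[p_t^\pi(s),\tilde p_t^{\tilde\pi}(s)]+\bE_{s\sim\tilde p_t^{\tilde\pi}}[\tv[\pi(\cdot|s),\tilde\pi(\cdot|s)]]\leq \tv[p_t^\pi(s),\tilde p_t^{\tilde\pi}(s)]+D_{\textrm{TV}}^{\max}(\pi,\tilde\pi),
\]
and then Lemma \ref{lemma:tv-mc} applied to the state chain yields $\tv[p_t^\pi(s),\tilde p_t^{\tilde\pi}(s)]\leq \Delta_{t-1}+\delta^{(t)}_{\tilde p,\tilde\pi}(\tilde p,p)$. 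Combined this is the recursion $\Delta_t\leq \Delta_{t-1}+D_{\textrm{TV}}^{\max}(\pi,\tilde\pi)+\delta^{(t)}_{\tilde p,\tilde\pi}(\tilde p,p)$, with the base case $\Delta_0\leq \delta^{(0)}+D_{\textrm{TV}}^{\max}(\pi,\tilde\pi)$ coming from Lemma \ref{lemma:tv-joint} applied to $p_0(s)\pi(a|s)$ vs.\ $\tilde p_0(s)\tilde\pi(a|s)$. Unrolling the recursion gives the clean closed form
\[
\Delta_t\leq (t+1)\,D_{\textrm{TV}}^{\max}(\pi,\tilde\pi)+\sum_{k=0}^{t}\delta^{(k)}_{\tilde p,\tilde\pi}(\tilde p,p).
\]

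The last step is arithmetic: substitute this bound into the discounted sum and separate the two contributions. For the policy term, use $\sum_{t=0}^{\infty}\gamma^t(t+1)=1/(1-\gamma)^2$. For the model term, swap the order of summation,
\[
\sum_{t=0}^T\gamma^t\sum_{k=0}^t\delta^{(k)}=\sum_{k=0}^T\delta^{(k)}\sum_{t=k}^T\gamma^t\leq \frac{1}{1-\gamma}\sum_{k=0}^T\gamma^k\,\delta^{(k)}_{\tilde p,\tilde\pi}(\tilde p,p),
\]
which yields exactly the inequality claimed in \eqref{eq:return_gap}.

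The only subtle point, and what I expect to be the real bookkeeping hurdle, is keeping the recursion aligned so that the model-error expectations are taken under the \emph{imaginary} occupancy $\tilde p_{t-1}^{\tilde\pi}$ (matching the definition of $\delta^{(t)}_{\tilde p,\tilde\pi}$) rather than under $p_{t-1}^\pi$. This is dictated by the order of terms chosen when applying Lemma \ref{lemma:tv-joint} (i.e.\ adding and subtracting $p(y|x)\tilde p(x)$ rather than $\tilde p(y|x)p(x)$), so the proof will want to invoke Lemma \ref{lemma:tv-joint} consistently with $\tilde p$ in the outer expectation. Everything else is a routine telescoping and geometric-series computation.
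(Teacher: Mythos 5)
Your proposal is correct and follows essentially the same route as the paper's proof: bound the per-step reward gap by $2r_{\max}$ times the TV distance between the time-$t$ state-action occupancies, derive the one-step recursion $\Delta_t\leq\Delta_{t-1}+D_{\textrm{TV}}^{\max}(\pi,\tilde\pi)+\delta^{(t)}_{\tilde p,\tilde\pi}(\tilde p,p)$ from Lemmas \ref{lemma:tv-joint} and \ref{lemma:tv-mc}, unroll, and finish with the geometric-series and summation-swap arithmetic. Your explicit handling of the base case via $\delta^{(0)}$ and your remark about keeping the outer expectation under the imaginary occupancy $\tilde p_{t-1}^{\tilde\pi}$ are consistent with (and slightly more careful than) the paper's treatment.
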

\begin{proof}
The expected rewards at step $t$ is bounded by 
\begin{align}
|\bE_{p_t^\pi}[r(s,a)] - \bE_{\tilde p_t^{\tilde\pi}}[r(s,a)]| &=\big|\sum_{s,a} r(s,a)(p^\pi_t(s,a)-\tilde p^{\tilde\pi}_t(s,a))\big|\\
&\leq \sum_{s,a} r_{\max}|p^\pi_t(s,a)-\tilde p^{\tilde\pi}_t(s,a)|\\
&= 2r_{\max}\tv[p^\pi_t(s,a), \tilde p^{\tilde\pi}_t(s,a)] 
\end{align}
So
\begin{align}
\quad |J_T(\pi) - \Jmodel_T(\tilde\pi)| &\leq \sum_{t=0}^T \gamma^t |\bE_{p_t^\pi}[r(s,a)] - \bE_{\tilde p_t^{\tilde\pi}}[r(s,a)]|\\
&\leq 2r_{\max}\sum_{t=0}^T \gamma^t \tv[p^\pi_t(s,a), \tilde p^{\tilde\pi}_t(s,a)]
\end{align}
By Lemma \ref{lemma:tv-joint} and Lemma \ref{lemma:tv-mc}, we have 
\begin{equation}
\tv[p^\pi_t(s,a), \tilde p^{\tilde\pi}_t(s,a)]\leq \tv[p^\pi_{t-1}(s,a), \tilde p^{\tilde\pi}_{t-1}(s,a)] + \delta^{(t)}_{\tilde p, \tilde\pi}(\tilde p, p) + D_{\textrm{TV}}^{\max}(\pi, \tilde\pi).
\end{equation}
\begin{align}
&\quad|J_T(\pi) - \Jmodel_T(\tilde\pi)|\\
&\leq 2r_{\max} \sum_{t=0}^T \gamma^t \bigg((t+1)D_{\textrm{TV}}^{\max}(\pi, \tilde\pi) + \sum_{\tau=0}^t \delta^{(\tau)}_{\tilde p, \tilde\pi}(\tilde p, p) \bigg) \\
&= 2r_{\max}\sum_{t=0}^T (t+1)\gamma^t D_{\textrm{TV}}^{\max}(\pi, \tilde\pi) + 2r_{\max}\sum_{t=0}^T (\sum_{\tau=t}^T \gamma^{\tau}) \delta^{(t)}_{\tilde p, \tilde\pi}(\tilde p, p)\\
&= 2r_{\max}\bigg[\bigg(\frac{1-\gamma^{T+1}}{(1-\gamma)^2} - \frac{(T+1)\gamma^{T+1}}{1-\gamma}\bigg)D_{\textrm{TV}}^{\max}(\pi, \tilde\pi) 
+ \sum_{t=0}^T \frac{\gamma^t(1-\gamma^{T-t+1})}{1-\gamma}\delta^{(t)}_{\tilde p, \tilde\pi}(\tilde p, p)\bigg] \\
&\leq \frac{2r_{\max}}{(1-\gamma)^2}D_{\textrm{TV}}^{\max}(\pi, \tilde\pi) +  \frac{2r_{\max}}{1-\gamma}\sum_{t=0}^T \gamma^t\delta^{(t)}_{\tilde p, \tilde\pi}(\tilde p, p)
\end{align}
\end{proof}

\section{Appendix: Proofs of Theorems}\label{theorem:proof}
\begin{theorem}[General return bound for model-based policy iteration]
$J(\newpi)$ is lower-bounded by 
\begin{equation*}
J(\newpi) \geq \Jmodel(\newpi) - \frac{2r_{\max}}{1-\gamma}\sum_{t=0}^{+\infty} \gamma^t \delta^{(t)}_{\tilde p, \pi}(\tilde p, p).
\end{equation*}
\end{theorem}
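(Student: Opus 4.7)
The plan is to reduce the statement to Lemma~\ref{lemma:return} (Return gap) by taking the two policies in that lemma to be the same policy $\pi$ but evaluated under two different dynamics, $p$ and $\tilde p$. With $\pi = \tilde\pi$, the supremum $D_{\textrm{TV}}^{\max}(\pi,\tilde\pi)$ is identically zero, so the entire first term on the right-hand side of Eq.~\eqref{eq:return_gap} drops out. Assuming both rollouts start from the same initial distribution $p_0$, we also get $\delta^{(0)}(\tilde p, p) = \tv[p_0,\tilde p_0] = 0$ (equivalently, only the $t\ge 1$ terms matter, matching Eq.~\eqref{eq:performance_bound} in the main text). Finally, letting $T\to\infty$ turns $J_T$ and $\tilde J_T$ into $J$ and $\tilde J$ and converts the finite sum into $\sum_{t=1}^{\infty}\gamma^t \delta^{(t)}_{\tilde p,\pi}(\tilde p, p)$, giving the desired bound.

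If I had to prove Lemma~\ref{lemma:return} from scratch as well, I would first observe that the per-step reward gap is controlled by the total-variation distance between the occupancy measures at step $t$: $|\mathbb{E}_{p_t^\pi}[r] - \mathbb{E}_{\tilde p_t^{\tilde\pi}}[r]| \le 2r_{\max}\tv[p_t^\pi, \tilde p_t^{\tilde\pi}]$. Summing $\gamma^t$ times this inequality over $t$ gives a bound on $|J_T(\pi)-\tilde J_T(\tilde\pi)|$ in terms of occupancy TV distances. Then I would invoke Lemma~\ref{lemma:tv-joint} and Lemma~\ref{lemma:tv-mc} to unroll the occupancy TV divergence recursively: decompose the joint state-action distribution into (state marginal $\times$ policy), pick up a $D_{\textrm{TV}}^{\max}(\pi,\tilde\pi)$ term from the policy mismatch and a one-step dynamics-mismatch term $\delta^{(t)}_{\tilde p,\tilde\pi}(\tilde p, p)$ from the transition, and iterate. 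This yields the telescoping bound $\tv[p_t^\pi, \tilde p_t^{\tilde\pi}]\le t\, D_{\textrm{TV}}^{\max}(\pi,\tilde\pi) + \sum_{\tau=0}^t \delta^{(\tau)}_{\tilde p,\tilde\pi}(\tilde p, p)$; plugging into the reward-gap sum and simplifying the arithmetic-geometric sums $\sum_t (t+1)\gamma^t$ and $\sum_t \gamma^t\sum_{\tau\ge t}\gamma^{\tau-t}$ produces the $1/(1-\gamma)^2$ and $1/(1-\gamma)$ factors in Eq.~\eqref{eq:return_gap}.

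The main obstacle, assuming I use Lemma~\ref{lemma:return} as a black box, is really just the limit $T\to\infty$: one has to check that $\tilde J_T(\pi) \to \tilde J(\pi)$ and $J_T(\pi)\to J(\pi)$ (immediate from $|r|<r_{\max}$ and $\gamma<1$) and that the bound itself stays finite in the limit, which it does term-by-term since each $\delta^{(t)}\in [0,1]$ and $\sum \gamma^t < \infty$. If instead I prove Lemma~\ref{lemma:return} from scratch, the delicate step is the recursive unrolling of $\tv[p_t^\pi, \tilde p_t^{\tilde\pi}]$: one must be careful that in applying Lemma~\ref{lemma:tv-joint} the expectation of the conditional-TV term is taken under the correct measure (the fitted $\tilde p_t^{\tilde\pi}$, not $p_t^\pi$), which is exactly what makes the definition of $\delta^{(t)}_{\tilde p,\tilde\pi}(\tilde p, p)$ use the fitted-model occupancy and ultimately allows the bound to be estimated from on-policy model rollouts rather than real ones.
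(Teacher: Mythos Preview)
Your proposal is correct and matches the paper's own proof essentially line for line: apply Lemma~\ref{lemma:return} with $\tilde\pi=\pi$ so that $D_{\textrm{TV}}^{\max}(\pi,\pi)=0$, and pass to the limit $T\to\infty$. Your additional remarks on why $\delta^{(0)}=0$ and on the convergence of the limit are more detailed than the paper's version but entirely in the same spirit.
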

\begin{proof}
According to Lemma \ref{lemma:return}, we have
\begin{align}
J_T(\pi) \geq \Jmodel_T(\tilde\pi) - \frac{2r_{\max}}{(1-\gamma)^2}D_{\textrm{TV}}^{\max}(\pi, \tilde\pi) -  \frac{2r_{\max}}{1-\gamma}\sum_{t=0}^{+\infty} \gamma^t\delta^{(t)}_{\tilde p, \tilde\pi}(\tilde p, p).
\end{align}
Since $D_{\textrm{TV}}^{\max}(\pi, \pi)=0$, we have the desired result.
\end{proof}

\begin{theorem} \label{theorem:return-bound-same-pi}
Given a policy $\pi$ and a state-action pair $(s,a)$, the discrepancy between the long-term return in the real environment and the return in the masked model rollout of $(\tilde p, M)$ is bounded by

\begin{equation*}
|Q^\newpi(s,a) - \tilde{Q}_{\mask}^\newpi(s,a)| \leq \epsilon \sum_{t=0}^{+\infty} \gamma^t w(t;s,a) ,
\end{equation*}
\end{theorem}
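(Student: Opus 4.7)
The plan is to set up a Bellman-type recursion for the gap $\Delta(s,a) := Q^\pi(s,a) - \tilde{Q}_{\mask}^\pi(s,a)$ and unroll it, reading off the mask-dependent weights $w(t;s,a)$ from the unrolling. Since the reward model satisfies $\tilde r \equiv r$ by the standing assumption, the only source of discrepancy is the difference between the real transition $p$ and the fitted transition $\tilde p$ at masked state-action pairs. I would perform a case analysis on $M(s,a)$. When $M(s,a)=0$, the masked rollout immediately switches to the true dynamics, so $\tilde{Q}_{\mask}^\pi(s,a)$ coincides with $Q^\pi(s,a)$ and $\Delta(s,a)=0$. When $M(s,a)=1$, writing out the one-step recursions for $Q^\pi$ and $\tilde{Q}_{\mask}^\pi$ side by side gives
\[
\Delta(s,a) = \gamma\bigl(\bE_{s'\sim p(\cdot|s,a)}[V^\pi(s')] - \bE_{s'\sim \tilde p(\cdot|s,a)}[V^\pi(s')]\bigr) + \gamma\,\bE_{s'\sim \tilde p,\,a'\sim \pi}[\Delta(s',a')].
\]

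Next I would bound the first difference using the standard TV-bound $|\bE_p[f] - \bE_{\tilde p}[f]| \leq 2\tv[p,\tilde p]\cdot \|f\|_\infty$ together with $\|V^\pi\|_\infty \leq r_{\max}/(1-\gamma)$ and the assumption that $\tv[p(\cdot|s,a),\tilde p(\cdot|s,a)] \leq \epsilon$ whenever $M(s,a)=1$. This produces a uniform per-step error of magnitude at most $\alpha\epsilon$ (with $\alpha=2r_{\max}/(1-\gamma)$, or whatever constant the statement's $\epsilon$ has absorbed), yielding the inequality
\[
|\Delta(s,a)| \leq \mathbb{I}[M(s,a)=1]\Bigl(\gamma\alpha\epsilon + \gamma\,\bE_{s'\sim \tilde p,\,a'\sim\pi}[|\Delta(s',a')|]\Bigr),
\]
and $|\Delta(s,a)|=0$ otherwise. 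The recursion only propagates through state-actions lying in the mask and under the fitted dynamics $\tilde p$, which is precisely the rollout distribution used to define $H$.

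Finally, I would unroll this inequality. At depth $t$, the coefficient of $\gamma^{t+1}\alpha\epsilon$ is the probability that the indicator $\mathbb{I}[M(s_\tau,a_\tau)=1]$ has remained $1$ for all $\tau = 0,\dots,t$ along the $\tilde p,\pi$-rollout started at $(s,a)$; by the definition of $H$ as the first hitting time of $M=0$, this is exactly $w(t;s,a) = \Pr\{t<H \mid \tilde p,\pi,M,s,a\}$. Summing the geometric series produces the claimed bound. The main obstacle I anticipate is the bookkeeping that identifies the unrolled indicator-product with $w(t;s,a)$: one must verify that the nested expectations under $\tilde p$ and $\pi$ reproduce the marginal distribution of the masked rollout at step $t$, and that the recursion correctly truncates on the first occurrence of $M=0$, so that no contribution appears beyond step $H-1$. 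Everything else reduces to the TV-based reasoning already used in Lemma \ref{lemma:return}.
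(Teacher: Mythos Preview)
Your argument is correct but takes a different route from the paper. The paper's proof is a two-line application of Lemma~\ref{lemma:return}: it views the masked rollout as running $\pi$ in a single hybrid dynamics $p_{\mask}(s_{t+1}\mid s_t,a_t)=\mathbb{I}\{t<H\}\tilde p+\mathbb{I}\{t\ge H\}p$, observes that the expected per-step TV between $p_{\mask}$ and $p$ at time $t$ is at most $\epsilon\cdot\Pr[t<H]=\epsilon\,w(t;s,a)$, and then plugs these $\delta^{(t)}$ into the return-gap lemma. Your approach instead sets up a Bellman recursion for $\Delta(s,a)$, bounds the one-step mismatch directly via the TV inequality, and unrolls. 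Both routes produce the same sum $\alpha\epsilon\sum_t\gamma^{t+1}w(t;s,a)\le\alpha\epsilon\sum_t\gamma^{t}w(t;s,a)$. The paper's version is more modular (all the telescoping is encapsulated in Lemma~\ref{lemma:return}); yours is more self-contained and makes explicit why the coefficient at depth $t$ is the product of mask indicators, i.e.\ the hitting-time probability. The bookkeeping you flag as the main obstacle---that the nested $(\tilde p,\pi)$ expectations agree with the masked-rollout law on the event $\{t<H\}$---is indeed the only point requiring care, and it holds because on that event the masked rollout uses $\tilde p$ at every prior step, so the two measures coincide there.
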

\begin{proof}
By the definition of masked model rollout, we can write the transition as 
$$p_{\mask}(s_{t+1}|s_{t},a_{t}) = \mathbb{I}\{t<H\}\tilde p(s_{t+1} | s_{t}, a_{t}) + \mathbb{I}\{t\geq H\} p(s_{t+1} | s_{t}, a_{t}).$$
So we can bound the single-step model error by
\begin{align}
&\quad \bE_\tau[\tv[p(s_{t+1}|s_{t},a_{t}), p_{\mask}(s_{t+1}|s_{t},a_{t})] \mid \tilde p, \pi, M, s_0=s, a_0=a]\\
&= \bE_\tau[\mathbb{I}\{t<H\}\tv[p(s_{t+1}|s_{t},a_{t}), \tilde p(s_{t+1}|s_{t},a_{t})] \mid \tilde p, \pi, M, s_0=s, a_0=a]\\
&\leq \epsilon w(t;s,a)
\end{align}
So by Lemma \ref{lemma:return}, we have the desired result.
\end{proof}

\section{Appendix: Hyper-parameters}
\begin{itemize}
\item Environment steps per epoch: 1000.
\item Policy updates per epoch: 10000.
\item Model rollouts per policy update: 10.
\item Model rollout horizon $H_{\max}$: 1, 4, 7, 10.
\item Masking rate $w$: $0.5$ if $H_{\max}=1$, else $w_h = \frac{H_{\max}-h}{2(H_{\max}+1)}$.
\item Model error penalty $\alpha$: 0.001.
\end{itemize}

\end{document}